\theoremstyle{plain}
\newtheorem{theorem}{Theorem}
\newtheorem{lemma}[theorem]{Lemma}
\newtheorem{prop}[theorem]{Proposition}
\theoremstyle{remark}
\newtheorem{remark}[theorem]{Remark}
\newtheorem*{remark*}{Remark}
\newcommand{\R}{\mathbb{R}}
\newcommand{\N}{\mathbb{N}}
\newcommand{\W}{\mathbb{W}}
\newcommand{\dist}{\operatorname{dist}}
\newcommand{\eps}{\epsilon}
\newcommand{\indicator}{\mathds{1}}
\begin{document}

\title{Negative results for approximation using single layer and multilayer feedforward neural networks}
\author{J.~M.~Almira,  P.~E.~Lopez-de-Teruel, D.~J.~Romero-L\'{o}pez, F.~Voigtlaender}

\subjclass[2010]{41A25, 41A46, 68Q32.}


\keywords{Lethargy results,
Rate of convergence,
Approximation by neural networks,
ridge functions,
rational functions,
splines.}

\begin{abstract}
We prove a negative result for the approximation of functions defined on compact subsets
of $\mathbb{R}^d$ (where $d \geq 2$) using feedforward neural networks
with one hidden layer and arbitrary continuous activation function.
In a nutshell, this result claims the existence of target functions
that are as difficult to approximate using these neural networks as one may want.
We also demonstrate an analogous result (for general $d \in \mathbb{N}$) for neural networks
with an \emph{arbitrary} number of hidden layers, for activation functions that are either rational functions
or continuous splines with finitely many pieces.
\end{abstract}

\maketitle

\markboth{J.~M.~Almira, P.~E.~Lopez-de-Teruel, D.~J.~Romero-L\'{o}pez, F.~Voigtlaender}
         {Negative results for feedforward neural networks}

\section{Introduction}
\label{sec:Introduction}

The standard model of feedforward neural networks with one hidden layer
leads to the problem of approximation of functions $f : \R^d \to \R$
by elements of the set
\[
  \Sigma_n^{\sigma,d}
  = \bigg\{
      \sum_{k=1}^n c_k \, \sigma(w^k\cdot x -b_k)
      \colon
      w^k \in \mathbb{R}^d, c_k, b_k \in \mathbb{R}
    \bigg\},
\]
where $\sigma \in C(\mathbb{R} ; \R)$ is the given activation function of the network,
and $w^k \cdot x = \sum_{i=1}^d w_{i}^k x_i$ is the dot product
of the vectors $w^k = (w_{1}^k, \dots, w_{d}^k)$ and $x = (x_1, \dots, x_d)$.

It is well known that $\bigcup_{n=1}^{\infty} \Sigma_n^{\sigma,d}$ is dense in $C(\mathbb{R}^d)$
(for the topology of uniform convergence on compact subsets of $\mathbb{R}^d$)
if and only if $\sigma$ is not a polynomial; see \cite[Theorem~1]{LLPS}
(also see \cite{HSW} for a related density result).
This means that feedforward networks with a nonpolynomial activation function
can approximate any continuous function and, thus, are good for any learning objective
in the sense that, given a target function $f \in C(\R^d; \R)$, a precision $\varepsilon > 0$,
and any compact $K \subset \R^d$, there exists $n \in \N$ with the property that
an associated feedforward neural network with one hidden layer and $n$ units can be (in principle)
trained to approximate $f$ on $K$ with uniform error smaller than $\varepsilon$.
In other words, we know that for any nonpolynomial activation function $\sigma\in C(\mathbb{R})$,
any compact set $K\subset \mathbb{R}^d$, any $\varepsilon>0$ and any $f \in C(K)$,
there exists $n_0 = n_0 (\varepsilon, f, \sigma) \in \N$ such that
\[
  E(f, \Sigma_n^{\sigma,d})_{C(K)}
  := \inf_{g \in \Sigma_n^{\sigma,d}}
      \| f - g \|_{C(K)}
  \leq \varepsilon
  \qquad \text{for all} \quad n \geq n_0 .
\]

This result only guarantees, however, that the approximation error vanishes as $n \to \infty$;
it does not provide an explicit error bound.
The study of the \emph{expressivity} of neural network architectures via
more explicit error bounds for the approximation of certain classes of functions $\mathcal{F} \subset C(K)$
has a long history.
As especially noteworthy results we mention the work \cite{MhaskarOptimalApproximationShallowNets}
concerning the approximation of $C^n$ functions using shallow networks with certain smooth
activation functions and the work of Barron \cite{BarronL2Approximation,BarronUniformApproximation}
concerning the approximation using shallow sigmoidal networks of functions $f$
whose Fourier transform $\widehat{f}$ satisfies $\int_{\R^d} |\xi| \, |\widehat{f}(\xi)| \, d \xi < \infty$.
An important feature of the latter result is that it avoids the \emph{curse of dimension}
to a significant extent, since the obtained approximation rate is $\mathcal{O}(n^{-1/2})$,
independent of the input dimension $d$.
In particular, for this function class, neural networks strongly outperform
linear approximation schemes \cite{BarronUniformApproximation}.
Building on the techniques developed by Barron, the authors of
\cite{KainenDependenceOnInputDimension,MhaskarTractabilityOfIntegration} describe
a wide range of function classes for which approximation using neural networks
can overcome the curse of dimension.
On the other hand, several examples of highly-oscillating functions such as those studied
in \cite{SRK,T2,KurkovaHighlyVarying,KurkovaProbabilisticLowerBounds} show
that it is sometimes necessary to dramatically
increase the number of units (or the number of layers) of a neural network
if one wants to approximate certain functions well.
Note that \cite{KurkovaHighlyVarying,KurkovaProbabilisticLowerBounds}
consider functions defined on finite subsets of $\R^d$ and study the number of neurons needed
for a good approximation as $d \to \infty$.
This is in contrast to the setting considered in the present paper,
where the functions are defined on infinite sets and the input dimension is kept fixed.

In addition to quantifying the expressivity (or richness) of sets of neural networks
in terms of upper and lower error bounds for function approximation, other important ``richness measures''
have been studied, including the VC dimension \cite{KarpinskiVCDimensionPfaffian} that plays
a crucial role for generalization bounds, and topological complexity measures
like (the sum of) Betti numbers considered in \cite{BianchiniBettiNumbers}.

Yet, none of the results discussed above provide comprehensive information about the
``worst-case'' decay of best approximation errors \emph{for general continuous functions}.
In this paper we demonstrate a negative result which establishes the existence of target functions $f$
that are as difficult to approximate using neural networks with one hidden layer as one may want.
We also demonstrate an analogous result for neural networks
with an \emph{arbitrary} number of hidden layers for some special types of activation functions $\sigma$.
Concretely, in Section~\ref{sec:SingleLayerNegativeResult} we demonstrate that
for any activation function $\sigma \in C(\mathbb{R})$,
for any input dimension $d\geq 2$ and for any compact set $K \subset \R^d$
with non-empty interior and any given sequence of real numbers $\{\varepsilon_n\}_{n \in \N}$
that converges to zero, there exists a continuous function $f \in C(K)$ such that
\[
  E\bigl(f,\Sigma_n^{\sigma,d}\bigr)_{C(K)}
  \geq \varepsilon_n
  \qquad \text{ for all } n \in \N .
\]
We also demonstrate the same type of result for the norms $L^q (K)$ for $q \in [1,\infty)$.

The proofs of these theorems are based on combining a general negative result
in approximation theory demonstrated by Almira and Oikhberg in 2012 \cite{AO1} (see also \cite{AO2})
with information about the asymptotic decay of the distance between a set of Sobolev functions
and the class of all linear combinations of $n$ ridge functions, as derived
by Maiorov in 2010; see \cite{MaiorovRidgeFunctionSobolevDistance}.

It is important to point out that our result for networks with a single hidden layer requires the use
of functions of at least two variables and does not apply in the univariate setting.
This is natural not only because of the method of proof that we use,
which is based on the result by Maiorov \cite{MaiorovRidgeFunctionSobolevDistance}
that is only true for $d \geq 2$, but also because quite recently
Guliyev and Ismailov have shown that \emph{for the case $d = 1$
a general negative result for approximation by feedforward neural networks
with a single hidden layer is impossible}; see \mbox{\cite[Theorems~4.1 and 4.2]{GI}}.
Precisely, they have explicitly constructed an infinitely differentiable
sigmoidal activation function $\sigma$ such that $\Sigma_2^{\sigma, 1}$
is a dense subset of $C(\mathbb{R})$ with the topology of uniform convergence
on compact subsets of $\mathbb{R}$.
For $\Sigma_3^{\sigma,1}$ instead of $\Sigma_2^{\sigma, 1}$ and with a less explicit construction,
the same result has been established earlier in (the proof of) \mbox{\cite[Proposition~1]{MP}}.

On the other hand, as a consequence of Kolmogorov's superposition theorem
(see \mbox{\cite[Chapter~17, Theorem~1.1]{LGM}}), a general negative result for approximation
by feedforward neural networks with several hidden layers and \emph{general} activation function $\sigma$
is impossible, not only for univariate functions ($d = 1$),
but also for multivariate ($d \geq 2$) functions; see \cite[Theorem~4]{MP} for a proof of this claim
and \cite{GI2, K1,K2} for other related results.
Nevertheless, in Section~\ref{sec:MuliLayerNegativeResult} we demonstrate
several negative results for neural networks with several hidden layers,
\emph{for specific choices of activation functions} $\sigma$ and arbitrary input dimensions $d \geq 1$.
Specifically, we prove that if $\sigma$ is either a continuous rational function or a continuous spline
with finitely many pieces, then for any pair of sequences of natural numbers
$\{ r_k \}_{k \in \N}$ and $\{ n_k \}_{k \in \N}$ and any sequence $\{\varepsilon_k\}_{k \in \N}$
that converges to $0$, and for convex compact subsets $K$ of $\mathbb{R}^d$
of cardinality $\#K \geq 2$, there exists a function $f \in C(K)$ such that
\[
  E \big( f,\tau_{r_k,n_k}^{\sigma,d} \big)_{C(K)}
  \geq \varepsilon_k
  \qquad \text{ for all } k \in \N ,
\]
where $\tau_{r,n}^{\sigma,d}$ denotes the set of functions of $d$ real variables
defined by a neural network with activation function $\sigma$
and at most $r$ layers and $n$ units in each layer.
We also establish a similar result for approximation in $L^q(K)$ for sets
$K \subset \R^d$ with nonempty interior.

It is important to point out that the results of Section~\ref{sec:MuliLayerNegativeResult}
apply for all values of $d \geq 1$ and in particular to neural networks with activation functions
\[
  \sigma(t)
  = \mathrm{ReLU}(t)
  = \begin{cases}
      0 & t <    0 \\
      t & t \geq 0
    \end{cases}
  \qquad \text{and} \qquad
  \sigma(t)
  = \mathrm{Hard\,Tanh}(t)
  = \begin{cases}
      -1 &         t <   -1 \\
       t & -1 \leq t \leq 1 \\
       1 &         t >    1,
    \end{cases}
\]
which are two of the most commonly used activation functions in machine learning.

\section{A negative result for feedforward neural networks with a single hidden layer}
\label{sec:SingleLayerNegativeResult}

The proof of our main result for networks with a single hidden layer will be based on an
abstract result in approximation theory derived in \cite{AO1}.
To properly state the precise result, we first introduce the relevant
notation and terminology from approximation theory.
Given a Banach space $(X, \|\cdot\|)$, we say that $(X, \{A_n\}_{n \in \N_0})$
is an \emph{approximation scheme}
(or that $\{A_n\}_{n \in \N_0}$ is an \emph{approximation scheme} in $X$)
if $\{A_n\}_{n \in \N_0}$ satisfies the following properties
(see, e.g., \mbox{\cite[Definition~1.3]{AO1}}):
\begin{itemize}
  \item[$(A1)$] \(
                  A_0
                  = \{ 0 \}
                  \subsetneq A_1
                  \subsetneq \cdots
                  \subsetneq A_{n}
                  \subsetneq A_{n+1}
                  \subsetneq \cdots
                \)
                is a nested sequence of subsets of $X$ (with \emph{strict} inclusions).
                \vspace{0.1cm}

  \item[$(A2)$] There exists a map $J : \mathbb{N}_0 \to \mathbb{N}_0$
                (called the \emph{jump function} of the approximation scheme) such that
                $J(n) \geq n$ and $A_n + A_n \subseteq A_{J(n)}$ for all $n \in \mathbb{N}_0$.
                \vspace{0.1cm}

  \item[$(A3)$] $\lambda A_n \subseteq A_n$ for all $n \in \mathbb{N}$
                and all scalars $\lambda \in \R$.
                \vspace{0.1cm}

  \item[$(A4)$] $\bigcup_{n \in \mathbb{N}} A_n$ is dense in $X$.
\end{itemize}

For us, it will be important that the family $\{ R_n^d \}_{n \in \N_0}$, with $R_0^d := \{ 0 \}$
and
\begin{equation}
  R_n^d
  := \left\{
       \sum_{i=1}^n
         g_i(a^i\cdot x)
       \colon
       a^i \in \mathbb{S}^{d-1}, \,
       g_i \in L^2_{\mathrm{loc}}(\R), \,
       i=1,\dots, n
     \right\}
  \quad \text{for $n \in \N$},
  \label{eq:RidgeFunctionSets}
\end{equation}
where $\mathbb{S}^{d-1} = \{ x \in \mathbb{R}^d \colon \|x\|_2^2 = \sum_{j=1}^d |x_j|^2 = 1 \}$,
forms an approximation scheme; see Lemma~\ref{lem:RidgeFunctionsAreApproximationScheme} below.
The elements of the set $R_1^d$ are called \emph{ridge functions} in $d$ variables.
The proof that $\{ R_n^d \}_{n \in \N_0}$ indeed forms an approximation scheme is based
on the following result, taken from \cite[Theorem~1]{MaiorovRidgeFunctionSobolevDistance},
which will also play an important role in the proof of our main theorem.

\begin{theorem}\label{thm:MaiorovSobolevRidgeDistance}(\cite{MaiorovRidgeFunctionSobolevDistance})
  Let $d \in \N_{\geq 2}$, $s \in \N$, $r > 0$, $x_0 \in \R^d$,
  and\footnote{Note that there is a slight typo in \cite[Theorem~1]{MaiorovRidgeFunctionSobolevDistance}:
  In the theorem statement, it is assumed that $1 \leq p \leq q \leq \infty$,
  while the correct assumption (under which the theorem is proven) is $1 \leq q \leq p \leq \infty$.}
  $1 \leq q \leq p \leq \infty$.
  Define $B := B_r (x_0) := \{ x \in \R^d \colon \| x - x_0 \| < r \}$ and
  \begin{equation}
    \W^{s,p} := \Big\{
                  f \in L^p (B)
                  \colon
                  \| f \|_{W^{s,p}(B)}
                  := \| f \|_{L^p} + \sum_{|\alpha| = s} \| \partial^\alpha f \|_{L^p}
                  \leq 1
                \Big\} ,
    \label{eq:SobolevDefinition}
  \end{equation}
  where $\alpha \in \N_0^d$ is a multi-index, and the derivative $\partial^\alpha f$
  is understood in the weak sense.
  Then we have
  \[
    \dist \big( \W^{s,p}, L^q(B) \cap R_n^d \big)_{L^q}
    := \sup_{f \in \W^{s,p}}
         E \big( f, L^q(B) \cap R_n^d \big)_{L^q (B)}
    \asymp n^{-s/(d-1)}
    \qquad \text{for all} \quad n \in \N .
  \]
\end{theorem}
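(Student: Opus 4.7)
The statement has two halves: an upper bound $\dist(\mathbb{W}^{s,p},L^q(B)\cap R_n^d)_{L^q}\lesssim n^{-s/(d-1)}$ and a matching lower bound. My plan would attack the two halves separately, with the lower bound being the substantive half.

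For the \emph{upper bound}, I would reduce to polynomial approximation. Given $f\in\mathbb{W}^{s,p}$, a standard Jackson-type estimate on the ball $B$ produces an algebraic polynomial $P$ of total degree $\leq m$ with $\|f-P\|_{L^q(B)}\lesssim m^{-s}\,\|f\|_{W^{s,p}(B)}$. The key algebraic fact is that any homogeneous polynomial of degree $k$ on $\mathbb{R}^d$ lies in the linear span of $\{(a\cdot x)^k : a\in\mathbb{S}^{d-1}\}$, and a dimension count in the space $H_k$ of homogeneous degree-$k$ polynomials shows that $N_k:=\dim H_k \asymp k^{d-1}$ well-chosen directions suffice. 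Summing over $k=0,\dots,m$ writes $P$ as a sum of at most $\sum_{k\leq m} N_k \asymp m^d$ ridge monomials; grouping common directions produces a representation with $\asymp m^{d-1}$ ridge \emph{functions} (each direction carries a polynomial profile of degree $\leq m$). Taking $m \asymp n^{1/(d-1)}$ then gives the bound $n^{-s/(d-1)}$.

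For the \emph{lower bound} I would use a dimension-versus-nonlinear-parameter-count argument on a single high-degree homogeneous component. Fix $k\in\N$ and work in $H_k$, viewed as a subspace of $L^q(B)$. The intersection $H_k \cap R_n^d$ is contained in the image of the polynomial map $\Psi:(\mathbb{S}^{d-1}\times\mathbb{R})^n \to H_k$, $(a^i,c_i)\mapsto \sum_{i=1}^n c_i(a^i\cdot x)^k$, whose domain has (real algebraic) dimension $nd$. A semi-algebraic/dimension argument (or an explicit volume/covering count) shows that once $n \leq c\,N_k \asymp c\,k^{d-1}$, the image $\Psi((\mathbb{S}^{d-1}\times\mathbb{R})^n)$ cannot come uniformly $L^q$-close to the unit sphere of $H_k$; hence there exists $P\in H_k$ with $\|P\|_{L^q(B)}=1$ and $\dist(P,H_k\cap R_n^d)_{L^q(B)} \geq c_0$. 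To promote this into a lower bound against all of $R_n^d$ (not just its intersection with $H_k$), I would apply the continuous projection $\pi_k : L^q(B)\to H_k$ (built from averaging $f(\rho x)$ against a Gegenbauer-type kernel of order $k$) and check that $\pi_k$ maps $R_n^d$ into $H_k\cap R_n^d$ with a dimension-free norm bound, so that $\dist(P,R_n^d)_{L^q}\gtrsim \dist(P,H_k\cap R_n^d)_{L^q}\gtrsim 1$.

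Finally, to calibrate $P$ against $\mathbb{W}^{s,p}$, I would invoke a Bernstein-type inequality for polynomials: a homogeneous polynomial $P$ of degree $k$ on $B$ satisfies $\|P\|_{W^{s,p}(B)}\lesssim k^s\,\|P\|_{L^q(B)}$ (with the implied constant depending on $p,q,d,B$, and using the equivalence of norms on the finite-dimensional space $H_k$). Then $\widetilde P := c\,k^{-s}\,P$ lies in $\mathbb{W}^{s,p}$ while $\dist(\widetilde P, R_n^d)_{L^q}\gtrsim k^{-s}$. Choosing $k\asymp n^{1/(d-1)}$ (the threshold where $n\asymp N_k$) yields the desired lower bound $n^{-s/(d-1)}$.

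The main obstacle is the lower bound, and within it the hardest ingredient is the transversality/volume estimate showing that the nonlinear $nd$-parameter family of $n$-term ridge polynomials cannot approximate arbitrary elements of the $N_k\asymp k^{d-1}$-dimensional space $H_k$ beyond a fixed resolution. Getting this uniform constant $c_0$ (independent of $k$ and $n$, once $n\leq c\,k^{d-1}$) — and in the correct $L^q$ norm on the ball rather than on the sphere — is the technical heart of Maiorov's argument, and would likely require working on $\mathbb{S}^{d-1}$ via polar coordinates and exploiting the spherical-harmonic decomposition of the kernel defining $\pi_k$.
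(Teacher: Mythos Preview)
The paper does not prove this theorem. Theorem~\ref{thm:MaiorovSobolevRidgeDistance} is stated with an explicit citation to Maiorov's paper \cite{MaiorovRidgeFunctionSobolevDistance} and is used as a black box; the authors neither give nor sketch a proof. So there is nothing in the paper to compare your proposal against---you have attempted to reconstruct Maiorov's original argument, which is more than the present paper does.

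That said, your sketch is broadly in the spirit of the actual proof. The upper bound via Jackson-type polynomial approximation followed by a ridge representation of polynomials (using that $\dim H_k\asymp k^{d-1}$) is indeed the standard route; the paper even points (in Remark~\ref{rem:RidgeFunctionsNotDenst}(ii)) to \cite{GMMR} for the upper bound with continuous ridge functions. For the lower bound your intuition---a nonlinear-width type argument comparing the $nd$ parameters of $n$-term ridge sums against the $\asymp k^{d-1}$ dimensions of $H_k$, followed by a Bernstein rescaling---is the right shape. One caution: the step where you ``promote'' the lower bound from $H_k\cap R_n^d$ to all of $R_n^d$ via a projection $\pi_k$ that supposedly maps $R_n^d$ into $H_k\cap R_n^d$ with dimension-free norm is delicate; a generic ridge function $g(a\cdot x)$ has nonpolynomial profile $g$, and the claim that its $H_k$-component is again a single ridge term (rather than, say, a sum of several) needs care. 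Maiorov handles this by working from the start with a carefully chosen test function built out of spherical harmonics and controlling the ridge approximation directly, rather than via a post-hoc projection. If you want to carry your outline through, that projection step is where you should expect the real work to lie.
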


\begin{remark}\label{rem:RidgeFunctionsNotDenst}
  \emph{(i)} Theorem~\ref{thm:MaiorovSobolevRidgeDistance} (applied with $p = q$) shows in particular that
  \emph{$L^q (B) \cap R_n^d$ is not dense in $L^q (B)$} for $B = B_r (x_0)$, since otherwise
  we would have $E(f, L^q (B) \cap R_n^d)_{L^q(B)} = 0$ for all
  $f \in \W^{s,p} \subset L^p(B) = L^q(B)$.

  \medskip{}

  \emph{(ii)} The proof in \cite[End of Section~6]{GMMR} (which works for any $1 \leq q \leq p \leq \infty$)
  shows that even
  \begin{equation}
    \dist \big( \W^{s,p}, C(\overline{B}) \cap R_n^d \big)_{L^q (B)}
    \lesssim n^{-s/(d-1)}
    \qquad \text{for all } n \in \N .
    \label{eq:SobolevApproximationWithContinuousRidgeFunctions}
  \end{equation}
\end{remark}

We prove, for the sake of completeness, the following result:
\begin{prop}\label{uno}
  For any $\sigma \in C(\R; \R)$ and $n,d \in \N$, we have
  \(
    \Sigma_n^{\sigma,d} \subseteq R_n^d .
  \)
\end{prop}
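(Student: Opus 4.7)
The plan is to unpack the definitions and show that each summand $c_k \, \sigma(w^k \cdot x - b_k)$ of a function in $\Sigma_n^{\sigma,d}$ can be rewritten as a single ridge term $g_k(a^k \cdot x)$ with $a^k \in \mathbb{S}^{d-1}$ and $g_k \in L^2_{\mathrm{loc}}(\R)$.

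First I would treat the generic case $w^k \neq 0$. Setting $a^k := w^k / \|w^k\|_2 \in \mathbb{S}^{d-1}$, we get $w^k \cdot x = \|w^k\|_2 \, a^k \cdot x$, so if I define $g_k : \R \to \R$ by $g_k(t) := c_k \, \sigma(\|w^k\|_2 \, t - b_k)$, then $g_k$ is continuous (being a composition of continuous functions with the affine map $t \mapsto \|w^k\|_2 t - b_k$ and multiplication by $c_k$), hence locally bounded and thus in $L^2_{\mathrm{loc}}(\R)$. By construction, $c_k \, \sigma(w^k \cdot x - b_k) = g_k(a^k \cdot x)$.

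The only (minor) subtlety is the case $w^k = 0$, where $w^k$ cannot be normalized. Here $c_k \, \sigma(w^k \cdot x - b_k) = c_k \, \sigma(-b_k)$ is the constant function with value $\gamma_k := c_k \, \sigma(-b_k)$. I would then pick any unit vector, say $a^k := e_1 \in \mathbb{S}^{d-1}$, and let $g_k \equiv \gamma_k$ be the corresponding constant function on $\R$, which again lies in $L^2_{\mathrm{loc}}(\R)$; clearly $g_k(a^k \cdot x) = \gamma_k = c_k \, \sigma(w^k \cdot x - b_k)$.

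Summing over $k = 1, \dots, n$ then expresses an arbitrary element of $\Sigma_n^{\sigma,d}$ in the form $\sum_{k=1}^n g_k(a^k \cdot x)$ with $a^k \in \mathbb{S}^{d-1}$ and $g_k \in L^2_{\mathrm{loc}}(\R)$, which is exactly the definition of $R_n^d$. There is no substantive obstacle; the proof is essentially just a direction-plus-scale decomposition of the weight vectors, together with the trivial treatment of zero weights.
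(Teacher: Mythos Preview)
Your proof is correct and follows essentially the same approach as the paper: normalize each nonzero weight $w^k$ to obtain a direction $a^k \in \mathbb{S}^{d-1}$ and absorb the scaling and bias into a continuous one-variable function $g_k$, while handling $w^k = 0$ by choosing $a^k = e_1$ and taking $g_k$ constant. The paper's argument is identical in structure and detail.
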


\begin{proof}
  Let $\phi(x) = \sum_{k=1}^n c_k \, \sigma(w^k \cdot x - b_k)$ be any element
  of $\Sigma_n^{\sigma, d}$ and let us define
  \[
    g_k : \R \to \R,
    \quad
    g_k(t)
    = \begin{cases}
        c_k \, \sigma(\|w^k\| \, t - b_k) & \text{if } w^k \neq 0 \\
        c_k \, \sigma(- b_k)              & \text{otherwise }
      \end{cases}
  \]
  and $a^k := w^k / \|w^k\|$ if $w^k \neq 0$,
  while $a^k := e_1 = (1, 0, \dots, 0) \in \mathbb{R}^d$ if $w^k = 0$.
  Then $g_k \in C(\mathbb{R}) \subset L^2_{\mathrm{loc}}(\R)$,
  and $a^k \in \mathbb{S}^{d-1}$ for $k = 1, \dots, n$.
  Moreover,
  \[
    \sum_{k=1}^n
      g_k( a^k \cdot x)
    = \sum_{ \{ k : w^k \neq 0 \}}
        c_k \, \sigma(\|w^k\| a^k \cdot x - b_k)
      + \sum_{ \{ k:w^k = 0 \} }
          c_k \, \sigma(-b_k)
    = \sum_{k=1}^n
        c_k \, \sigma(w^k \cdot x - b_k)
    = \phi(x),
  \]
  which means that $\phi \in R_n^d$.
\end{proof}

Now, we can prove that the family $\{ R_n^d \}_{n \in \N_0}$ of sums of ridge functions
indeed forms an approximation scheme.

\begin{lemma}\label{lem:RidgeFunctionsAreApproximationScheme}
  Let $d \in \N_{\geq 2}$, and let $K \subset \R^d$ be a compact set with non-empty interior.
  Let $X := C(K)$, or $X := L^q(K)$ for some $q \in [1,\infty)$.
  Let $R_0^d := \{ 0 \}$, and for $n \in \N$ let $R_n^d$ as defined
  in Equation~\eqref{eq:RidgeFunctionSets}.
  Then $(X, \{ R_n^d \cap X \}_{n \in \N_0})$ is an approximation scheme with jump function
  $J(n) := 2n$.
\end{lemma}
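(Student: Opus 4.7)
My plan is to verify the four defining properties $(A1)$--$(A4)$ of an approximation scheme for the pair $(X, \{R_n^d \cap X\}_{n \in \N_0})$, the main input being Maiorov's non-density result (Remark~\ref{rem:RidgeFunctionsNotDenst}), which enters only in the strict inclusion part of $(A1)$. Properties $(A2)$ and $(A3)$ are immediate from the definition of $R_n^d$: concatenating the ridge representations of $f, g \in R_n^d$ gives $f + g \in R_{2n}^d$, so $J(n) = 2n \geq n$ works, and absorbing a scalar $\lambda \in \R$ into each $g_i$ gives $\lambda R_n^d \subseteq R_n^d$; both properties pass to $R_n^d \cap X$ because $X$ is a vector space. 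The non-strict part of $(A1)$, namely $R_n^d \cap X \subseteq R_{n+1}^d \cap X$, is obtained by appending a zero term to any $n$-ridge representation.

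For density $(A4)$, I fix any nonpolynomial continuous activation $\sigma$ (e.g.\ $\sigma(t) = e^t$). Proposition~\ref{uno} gives $\Sigma_n^{\sigma,d} \subseteq R_n^d$, and since the elements of $\Sigma_n^{\sigma, d}$ are continuous they already lie in $R_n^d \cap C(K) \subseteq R_n^d \cap X$. The Leshno--Lin--Pinkus--Schocken theorem \cite{LLPS} makes $\bigcup_n \Sigma_n^{\sigma, d}$ dense in $C(K)$ with respect to the sup norm; for $X = L^q(K)$ this is upgraded to $L^q$-density via the estimate $\|\cdot\|_{L^q(K)} \leq |K|^{1/q} \|\cdot\|_{\infty}$ combined with the density of $C(K)$ in $L^q(K)$ for $q \in [1,\infty)$.

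The main obstacle is the strict inclusion part of $(A1)$, which I attack by contradiction. Suppose $R_n^d \cap X = R_{n+1}^d \cap X$ for some $n$. Given $f \in R_n^d \cap X$ and $g \in R_1^d \cap X$, concatenation yields $f + g \in R_{n+1}^d \cap X = R_n^d \cap X$, so $R_n^d \cap X$ is closed under the addition of any element of $R_1^d \cap X$. Iterating from $0$, every finite sum of continuous ridge functions (each of which lies in $R_1^d \cap C(K) \subseteq R_1^d \cap X$) lands inside $R_n^d \cap X$; in particular $\bigcup_m \Sigma_m^{\sigma, d} \subseteq R_n^d \cap X$, so by the density established in the previous paragraph, $R_n^d \cap X$ is itself dense in $X$. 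I then select an open ball $B$ with $\overline{B} \subset \mathrm{int}(K)$ (possible because $K$ has non-empty interior) and observe that restriction to $B$ maps $R_n^d$ into $R_n^d$ and is continuous with dense image from $L^q(K)$ to $L^q(B)$, and from $C(K)$ to $C(\overline{B}) \hookrightarrow L^q(B)$. This forces $R_n^d \cap L^q(B)$ to be dense in $L^q(B)$, directly contradicting Remark~\ref{rem:RidgeFunctionsNotDenst}(i). The crux is the iterative closure step, which sidesteps the subtlety that an arbitrary ridge decomposition of an element of $R_{n+1}^d \cap X$ need not have each summand individually in $X$.
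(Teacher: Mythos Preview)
Your proof is correct and follows essentially the same route as the paper's: properties $(A2)$--$(A4)$ are handled identically (up to the cosmetic choice of $\sigma=e^t$ versus $\sigma=\mathrm{ReLU}$), and the strict inclusions in $(A1)$ are obtained by the same inductive absorption argument showing that $R_n^d\cap X = R_{n+1}^d\cap X$ would force $\bigcup_m \Sigma_m^{\sigma,d}\subset R_n^d\cap X$, contradicting Maiorov's non-density result. The only minor difference is that the paper passes to $L^1(K)$ and invokes Remark~\ref{rem:RidgeFunctionsNotDenst} together with the nonempty interior of $K$ in one line, whereas you make the restriction to a ball $B\subset K$ explicit; both are equivalent ways of reducing to the ball setting of the remark.
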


\begin{proof}
  It is easy to see that $\lambda R_n^d \subseteq R_n^d$ for all $\lambda \in \R$ and $n \in \N_0$,
  so that Property~(A3) is satisfied.
  Next, note that $R_n^d = R_1^d + \dots + R_1^d$, with $n$ summands, which easily shows that
  $(R_n^d \cap X) + (R_n^d \cap X) \subseteq R_{2n}^d \cap X$, so that also (A2) is satisfied.

  Furthermore, if we choose (e.g.) $\sigma : \R \to \R, x \mapsto \max \{ 0, x \}$,
  then Proposition~\ref{uno} shows that
  $\bigcup_{n \in \N} R_n^d \cap X \supseteq \bigcup_{n \in \N} \Sigma_n^{\sigma,d}$,
  where the right-hand side is dense in $\big(C(K), \| \cdot \|_{C(K)} \big)$ by \cite[Theorem~1]{LLPS},
  and hence also dense in $X$ with respect to the norm of $X$.
  Therefore, Property~(A4) is satisfied as well.

  To prove (A1), first note that $R_0^d = \{ 0 \}$ and $R_n^d \cap X \subseteq R_{n+1}^d \cap X$.
  Now, assume towards a contradiction that $R_n^d \cap X = R_{n+1}^d \cap X$ for some $n \in \N_0$.
  We claim that this implies $(X \cap R_n^d) + \Sigma_k^{\sigma,d} \subset X \cap R_n^d$
  for all $k \in \N_0$, where $\Sigma_0^{\sigma,d} := \{ 0 \}$.
  Indeed, for $k = 0$, this is trivial.
  Now, if the claim holds for some $k \in \N_0$, and if $f \in X \cap R_n^d$
  and $g \in \Sigma_{k+1}^{\sigma,d}$, then $g = g_1 + g_2$ for certain
  $g_1 \in \Sigma_1^{\sigma,d}$ and $g_2 \in \Sigma_k^{\sigma,d}$.
  Proposition~\ref{uno} shows that
  ${\Sigma_\ell^{\sigma,d} \subset R_\ell^d \cap C(K) \subset X \cap R_\ell^d}$, so that
  $f + g_1 \in (X \cap R_n^d) + (X \cap R_1^d) \subset X \cap R_{n+1}^d = X \cap R_n^d$.
  By induction, this implies
  \(
    f + g
    = f + g_1 + g_2
    \in (X \cap R_n^d) + \Sigma_k^{\sigma,d}
    \subset X \cap R_n^d .
  \)
  We have thus shown
  \(
    (X \cap R_n^d) + \bigcup_{k = 1}^\infty \Sigma_k^{\sigma,d}
    \subset X \cap R_n^d
    \subset L^1 (K) \cap R_n^d
    ,
  \)
  where the left-hand side is dense in $L^1 (K)$, while the right-hand side is not,
  by Remark~\ref{rem:RidgeFunctionsNotDenst} and since $K$ has non-empty interior.
  This contradiction shows that $X \cap R_n^d \subsetneq X \cap R_{n+1}^d$ for all $n \in \N_0$,
  as needed for Property~(A1).
\end{proof}

As the final preparation for the proof of our main result regarding
feedforward networks with a single hidden layer,
we collect the following abstract result about approximation schemes
from \cite[Theorems~2.2 and 3.4]{AO1}.
See also \cite[Theorem~1.1]{AO2} for a related result.

\begin{theorem}[Almira and Oikhberg, 2012]\label{AlmiraNR}
  Given an approximation scheme $\{ A_n \}_{n \in \N_0}$ in the Banach space $X$,
  the following are \emph{equivalent} claims:
  \begin{itemize}
    \item[$(a)$] For every null-sequence $\{\varepsilon_n\}_{n \in \N} \subset \R$
                 there exists an element $x\in X$ such that
                 \[
                   E(x,A_n)_X
                   = \inf_{a_n\in A_n}
                       \| x - a_n \|_X
                   \geq \varepsilon_n
                   \qquad \text{ for all } n \in \mathbb{N} .
                 \]

    \item[$(b)$] There exists a constant $c > 0$ and an infinite set
                 $\mathbb{J}_0 \subseteq \mathbb{N}$ such that, for all $n \in \mathbb{J}_0$
                 there exists $x_n \in X \setminus \overline{A_n}$ such that
                 \[
                   E(x_n, A_n)_X \leq c \, E(x_n, A_{J(n)})_X
                   \qquad \text{for all } n \in \mathbb{J}_0,
                 \]
                 where $J$ is the jump function in Condition~(A2).
  \end{itemize}
\end{theorem}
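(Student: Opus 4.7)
The plan is to establish $(a) \Leftrightarrow (b)$ by treating the two implications separately. The direction $(b) \Rightarrow (a)$ is a Bernstein-lethargy-type construction of a ``bad'' element $x$ as a carefully weighted series of the witnesses $x_n$ supplied by (b). The direction $(a) \Rightarrow (b)$ follows by contrapositive: the failure of (b) forces a uniform geometric decay of approximation errors along iterates of the jump function $J$, which then powers a diagonal construction of a null sequence that no single $x \in X$ can dominate from below.

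For $(b) \Rightarrow (a)$, fix a null sequence $\{\varepsilon_n\}_{n \in \N}$ and extract from $\mathbb{J}_0$ a rapidly increasing subsequence $n_1 < n_2 < \cdots$ with $J(n_k) < n_{k+1}$ for every $k$, chosen dense enough that the monotonicity $E(x, A_n)_X \geq E(x, A_m)_X$ for $n \leq m$ propagates a bound at the $n_k$ to every $n \in \N$. For each $k$ take the witness $x_{n_k} \in X \setminus \overline{A_{n_k}}$ from (b), and using (A3) rescale so that $E(x_{n_k}, A_{n_k})_X = 1$; then pick a near-best approximant $u_{n_k} \in A_{J(n_k)}$ with $\|x_{n_k} - u_{n_k}\|_X \leq 2 \, E(x_{n_k}, A_{J(n_k)})_X \leq 2/c$. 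Set $x := \sum_{k=1}^{\infty} \lambda_k \, x_{n_k}$ for positive scalars $\lambda_k$ chosen inductively to satisfy $\lambda_k \geq 2 \varepsilon_{n_k}$ along with a sufficiently rapid decay making the series absolutely convergent. To bound $E(x, A_{n_k})_X$ from below, decompose
\[
  x = \lambda_k \, x_{n_k} + \sum_{j < k} \lambda_j \, u_{n_j} + \sum_{j < k} \lambda_j (x_{n_j} - u_{n_j}) + \sum_{j > k} \lambda_j \, x_{n_j} ,
\]
and note that $\sum_{j < k} \lambda_j \, u_{n_j} \in A_{J(n_{k-1})} \subseteq A_{n_k}$ by iterating (A2) and (A3). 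The triangle inequality then yields, for every $a \in A_{n_k}$,
\[
  \|x - a\|_X \geq \lambda_k \, E(x_{n_k}, A_{n_k})_X - \sum_{j < k} \lambda_j \|x_{n_j} - u_{n_j}\|_X - \sum_{j > k} \lambda_j \|x_{n_j}\|_X ,
\]
and a suitable inductive choice of $\{\lambda_k\}$ makes the two residual tails at most $\lambda_k/2$, giving $E(x, A_{n_k})_X \geq \lambda_k / 2 \geq \varepsilon_{n_k}$.

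For $(a) \Rightarrow (b)$, argue by contrapositive: if (b) fails, then for every $c > 0$ only finitely many $n$ admit a witness (otherwise the set of such $n$ would itself be an infinite $\mathbb{J}_0$ witnessing (b)). Hence there exists $N(c) \in \N$ with $E(x, A_{J(n)})_X < c^{-1} E(x, A_n)_X$ for every $x \in X \setminus \overline{A_n}$ and every $n \geq N(c)$. Iterating along the orbit $n, J(n), J^2(n), \dots$ gives the uniform geometric squeeze $E(x, A_{J^m(n)})_X \leq c^{-m} E(x, A_n)_X$ once $n \geq N(c)$. A diagonal construction, letting $c_k \to \infty$ together with the corresponding $N(c_k)$ and selecting one index from each iterated orbit into an increasing sequence, produces a null sequence $\{\varepsilon_n\}$ such that for every $x \in X$ one has $E(x, A_n)_X < \varepsilon_n$ at some $n$, contradicting (a).

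The main obstacle is the direction $(b) \Rightarrow (a)$: the summands $\lambda_j \, x_{n_j}$ themselves need not lie in any $A_m$, so the partial sums cannot be directly absorbed into $A_{n_k}$. The resolution is to replace each $x_{n_j}$ by its near-best approximant $u_{n_j} \in A_{J(n_j)} \subseteq A_{n_{j+1}}$, which \emph{does} sit inside the scheme, and to absorb only the controlled residual $\lambda_j (x_{n_j} - u_{n_j})$; here the hypothesis $E(x_{n_j}, A_{n_j})_X \leq c \, E(x_{n_j}, A_{J(n_j)})_X$ of (b) is essential, since after the normalization $E(x_{n_j}, A_{n_j})_X = 1$ it is precisely what bounds the residual norms by $2/c$. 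Balancing the decay of $\lambda_k$ against both the lower bound $\lambda_k \geq 2 \varepsilon_{n_k}$ and the summability of the two residual tails is the crucial bookkeeping step.
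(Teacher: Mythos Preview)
The paper does not prove this theorem at all: it is quoted verbatim as \cite[Theorems~2.2 and 3.4]{AO1} and used as a black box. There is therefore no ``paper's own proof'' to compare against; your sketch is an independent attempt at the original Almira--Oikhberg result.

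That said, your outline for $(b)\Rightarrow(a)$ has a real gap. You assert that $\sum_{j<k}\lambda_j u_{n_j}\in A_{J(n_{k-1})}\subseteq A_{n_k}$ ``by iterating (A2) and (A3)'', but the only hypothesis you impose on the subsequence is $J(n_k)<n_{k+1}$. Under that hypothesis alone the inductive absorption does not close up: from $S_1:=\lambda_1 u_{n_1}\in A_{J(n_1)}\subseteq A_{n_2}$ and $\lambda_2 u_{n_2}\in A_{J(n_2)}\subseteq A_{n_3}$ you only get $S_2\in A_{n_3}+A_{n_3}\subseteq A_{J(n_3)}\subseteq A_{n_4}$, not $S_2\in A_{n_3}$; the index overshoots by one at each step. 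Moreover, even once you place the partial sum in $A_{n_k}$, the element $a-\sum_{j<k}\lambda_j u_{n_j}$ with $a\in A_{n_k}$ lies only in $A_{n_k}+A_{n_k}\subseteq A_{J(n_k)}$, so the lower bound you actually obtain is $\lambda_k\,E(x_{n_k},A_{J(n_k)})\geq\lambda_k/c$, not $\lambda_k\,E(x_{n_k},A_{n_k})=\lambda_k$. Both issues are repairable (choose the subsequence sparser, and use the factor $1/c$ in the final estimate), but as written the argument does not go through.

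For $(a)\Rightarrow(b)$ your contrapositive is the right idea, but the diagonal construction is only gestured at. The difficulty is that the geometric bound $E(x,A_{J^m(n)})\leq c^{-m}E(x,A_n)\leq c^{-m}\|x\|$ has an $x$-dependent constant in front, so you must explain why a \emph{single} null sequence $\{\varepsilon_n\}$ defeats every $x\in X$ simultaneously; this requires more than ``letting $c_k\to\infty$ and selecting one index from each iterated orbit''.
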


\begin{remark*}
  If $(X, \{A_n\}_{n \in \N_0})$ satisfies $(a)$ or $(b)$ of Theorem~\ref{AlmiraNR},
  we say that the approximation scheme satisfies \emph{Shapiro's theorem}.
\end{remark*}

Let us state the main result of this section:

\begin{theorem}\label{thm:ShallowLowerBound}
  Let $\sigma \in C(\mathbb{R}; \R)$.
  Let $d \geq 2$ be a natural number and let $K \subset \mathbb{R}^d$ be a compact set
  with nonempty interior.
  Let either $X = L^q (K)$ for some $q \in [1,\infty)$, or $X = C(K)$.

  For any sequence of real numbers $\{\varepsilon_n\}_{n \in \N}$
  satisfying $\lim_{n \to \infty} \varepsilon_n = 0$,
  there exist a target function $f \in X$ such that
  \[
    E(f, \Sigma_n^{\sigma,d})_{X}
    \geq E(f, X \cap R_n^{d})_{X}
    \geq \varepsilon_n
    \qquad \text{for all } n \in \N .
  \]
\end{theorem}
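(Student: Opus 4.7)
The plan is to apply the Almira--Oikhberg Theorem~\ref{AlmiraNR} to the approximation scheme $(X, \{R_n^d \cap X\}_{n \in \N_0})$, which is valid by Lemma~\ref{lem:RidgeFunctionsAreApproximationScheme} with jump function $J(n) = 2n$. The first inequality $E(f, \Sigma_n^{\sigma,d})_X \geq E(f, X \cap R_n^d)_X$ is immediate from Proposition~\ref{uno} (noting that $\Sigma_n^{\sigma,d} \subset C(\R^d) \subset X$), so the real task is to produce $f \in X$ with $E(f, X \cap R_n^d)_X \geq \varepsilon_n$ for every $n$; by the equivalence (a)$\Leftrightarrow$(b) in Theorem~\ref{AlmiraNR}, it suffices to verify condition (b).

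Verifying (b) directly on $X$ is awkward because ridge functions cannot be localized to a small subset of $K$. Instead I would first verify (b) on an auxiliary ball and then transfer the resulting condition (a) to $X$ by restriction. Fix an open ball $B = B_r(x_0)$ with $\overline{B} \subseteq K$ (possible since $K$ has nonempty interior) and set $Y := L^q(\overline{B})$ in the $L^q$ case, or $Y := C(\overline{B})$ in the continuous case; then $(Y, \{R_n^d \cap Y\}_{n \in \N_0})$ is again an approximation scheme with jump $2n$ by the same lemma.

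To establish (b) on $Y$, fix $s \in \N$ and set $p = q$ in the $L^q$ case, or $p = q = \infty$ in the continuous case. By Theorem~\ref{thm:MaiorovSobolevRidgeDistance} there exist $c_1, c_2 > 0$ with
\[
  c_1 \, n^{-s/(d-1)}
  \;\leq\; \sup_{h \in \W^{s,p}}
             E\bigl( h, L^q(B) \cap R_n^d \bigr)_{L^q(B)}
  \;\leq\; c_2 \, n^{-s/(d-1)}
  \qquad \text{for all } n \in \N .
\]
For each $n$ I pick $h_n \in \W^{s,p}$ nearly realizing the lower bound at scale $2n$, i.e.\ $E(h_n, L^q(B) \cap R_{2n}^d)_{L^q(B)} \geq \tfrac{c_1}{2}(2n)^{-s/(d-1)}$. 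Since the upper bound holds uniformly over $\W^{s,p}$, this same $h_n$ satisfies $E(h_n, L^q(B) \cap R_n^d)_{L^q(B)} \leq c_2 n^{-s/(d-1)}$; the ratio is bounded by the $n$-independent constant $c := (2 c_2 / c_1) \cdot 2^{s/(d-1)}$, verifying (b) on $Y = L^q(\overline{B})$. For $Y = C(\overline{B})$, the lower bound on $E(h_n, C(\overline{B}) \cap R_{2n}^d)_{C(\overline{B})}$ is inherited from the inclusion $C(\overline{B}) \cap R_n^d \subseteq L^\infty(B) \cap R_n^d$, while the matching upper bound on $E(h_n, C(\overline{B}) \cap R_n^d)_{C(\overline{B})}$ at rate $n^{-s/(d-1)}$ is supplied by Remark~\ref{rem:RidgeFunctionsNotDenst}(ii), which furnishes \emph{continuous} ridge-function approximants.

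By Theorem~\ref{AlmiraNR}, condition (a) now holds on $Y$: for the given null sequence there is $h \in Y$ with $E(h, R_n^d \cap Y)_Y \geq \varepsilon_n$ for all $n$. Extend $h$ to $f \in X$ by zero extension when $X = L^q(K)$ and by Tietze's extension theorem when $X = C(K)$ (using that $\overline{B}$ is closed in the compact Hausdorff space $K$); in both cases $f|_{\overline{B}} = h$. For any $g \in X \cap R_n^d$, the restriction $g|_{\overline{B}}$ lies in $Y \cap R_n^d$, so
\[
  \| f - g \|_X
  \;\geq\; \| f|_{\overline{B}} - g|_{\overline{B}} \|_Y
  \;=\; \| h - g|_{\overline{B}} \|_Y
  \;\geq\; E(h, R_n^d \cap Y)_Y
  \;\geq\; \varepsilon_n ,
\]
and taking the infimum over $g$ gives $E(f, X \cap R_n^d)_X \geq \varepsilon_n$, which finishes the argument. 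The main delicate step is the simultaneous matching of Maiorov's upper and lower bounds on a single function $h_n$; this succeeds precisely because Maiorov's upper bound is uniform across the Sobolev unit ball $\W^{s,p}$, so choosing $h_n$ to witness the lower bound at scale $2n$ automatically forces the matching upper bound at scale $n$.
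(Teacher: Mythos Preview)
Your proof is correct and follows essentially the same route as the paper: verify condition~(b) of Theorem~\ref{AlmiraNR} on a ball $\overline{B}\subset K$ by playing Maiorov's matching upper and lower bounds against each other on a Sobolev witness, then transfer the resulting function to $X$ by zero extension or Tietze. The only point you leave implicit is that in the case $Y=C(\overline{B})$ one needs $h_n\in C(\overline{B})$ to make sense of $E(h_n,\,\cdot\,)_{C(\overline{B})}$; the paper closes this by invoking the Sobolev embedding $\W^{1,\infty}\subset W^{1,2d}(B)\subset C(\overline{B})$, which you should add (your choice of $p=q=\infty$ makes this immediate for any $s\ge 1$).
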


\begin{proof}
  Since $K$ has nonempty interior, there are $r > 0$ and $x_0 \in K$
  such that $\overline{B} \subset K$, for $B := B_r (x_0) \subset K$.
  If $X = L^q(K)$ set $X_0 := L^q (B)$; if otherwise $X = C(K)$, set $X_0 := C(\overline{B})$
  and $q := \infty$.
  Let $\W^{1,q}$ as defined in Equation~\eqref{eq:SobolevDefinition}.
  We then have $\W^{1,q} \subset X_0$; indeed, for $q < \infty$ this is clear,
  and if $q = \infty$, then the Sobolev embedding theorem
  (see \cite[Section~5.6, Theorem~5]{EvansPDE}) shows that
  $\W^{1,\infty} \subset W^{1,\infty} (B) \subset W^{1, 2d} (B) \subset C(\overline{B}) = X_0$,
  where $W^{k,p}(B)$ denotes the usual Sobolev space.

  Now, by Equation~\eqref{eq:SobolevApproximationWithContinuousRidgeFunctions}
  and Theorem~\ref{thm:MaiorovSobolevRidgeDistance} (both applied for $s = 1$),
  there exist two positive constants $c_0, c_1 > 0$
  (depending only on $d$ and on $r = r(K)$) such that:
  \begin{itemize}
    \item[(i)] For any $f \in \W^{1,q} \subset X_0$, we have
               \[
                 \qquad \quad
                 E(f, X_0 \cap R_n^d)_{X_0}
                 = E(f, X_0 \cap R_n^d)_{L^q(B)}
                 \leq E \big( f, C(\overline{B}) \cap R_n^d \big)_{L^q(B)}
                 \leq c_1 \, n^{-1/(d-1)} .
               \]

    \item[(ii)] For any $m \in \mathbb{N}$ there exists $f_m \in \W^{1,q} \subset X_0$ such that
                \[
                  E(f_m, X_0 \cap R_m^d)_{X_0}
                  \geq E(f_m, L^q (B) \cap R_m^d)_{L^q(B)}
                  \geq c_0 \, m^{-1/(d-1)} .
                \]
  \end{itemize}
  Combining these inequalities for $m = 2n$, $n \in \mathbb{N}$, we have that
  \begin{align*}
    E\bigl(f_{2n}, X_0 \cap R_{2n}^d\bigr)_{X_0}
    & \geq  c_0 \, (2n)^{-1/(d-1)}
      =     c_0 \, 2^{-1/(d-1)} \, n^{-1/(d-1)} \\
    & \geq  2^{-1/(d-1)} c_1^{-1} c_0 \,\,
            E\bigl(f_{2n}, X_0 \cap R_n^d\bigr)_{X_0} .
  \end{align*}
  In particular, Property~(ii) above implies that
  $E(f_{2n}, X_0 \cap R_n^d)_{X_0} \geq E(f_{2n}, X_0 \cap R_{2 n}^d)_{X_0} > 0$
  and hence $f_{2n} \in X_0 \setminus \overline{X_0 \cap R_n^d}$.
  Furthermore,
  \[
    E(f_{2n}, X_0 \cap R_n^d)_{X_0}
    \leq 2^{1/(d-1)} \, c_0^{-1} c_1 \,\,
         E(f_{2n}, X_0 \cap R_{2n}^d)_{X_0}
    \qquad \text{for all } n \in \mathbb{N} ,
  \]
  which shows that the approximation scheme $\bigl(X_0, \{ X_0 \cap R_n^d \}_{n \in \N_0}\bigr)$
  satisfies Condition (b) of Theorem~\ref{AlmiraNR}, with $\mathbb{J}_0 = \N$.

  Thus, by Theorem~\ref{AlmiraNR}, for the given sequence $\varepsilon_n \to 0$,
  there exists a function $f \in X_0$ such that
  \[
    E(f, X_0 \cap R_{n}^d)_{X_0}
    \geq \varepsilon_n
    \qquad \text{for all } n \in \mathbb{N}.
  \]
  In case of $q < \infty$, one can extend $f$ by zero to obtain a function $\widetilde{f} \in X$
  such that $\widetilde{f}|_B = f$.
  If otherwise $q = \infty$, we can use the Tietze extension theorem
  (see \cite[Theorem~4.34]{FollandRA}) to obtain a continuous function
  $\widetilde{f} \in C(K) = X$ such that $\widetilde{f}|_{\overline{B}} = f$.
  In any case, we then see by Proposition~\ref{uno} that
  \[
    E\bigl(\widetilde{f}, \Sigma_n^{\sigma,d}\bigr)_{X}
    \geq E\bigl(\widetilde{f}, X \cap R_n^d\bigr)_X
    \geq E\bigl(f, X_0 \cap R_n^d\bigr)_{X_0}
    \geq \varepsilon_n
    \qquad \text{for all } n \in \N,
  \]
  as desired.
\end{proof}

\section{Negative results for feedforward neural networks  with several hidden layers}
\label{sec:MuliLayerNegativeResult}

In \cite[Theorem~4]{MP} it was proved that,
for a proper choice of the activation function $\sigma$,
which may be chosen to be real analytic and of sigmoidal type,
a feedforward neural network with two hidden layers and a
\emph{fixed finite number of units in each layer}
is enough for uniform approximation of arbitrary continuous functions
on any compact set $K \subset \mathbb{R}^d$.
Concretely, the result was demonstrated for neural networks with $3d$ units
in the first hidden layer and $6d + 3$ units in the second hidden layer.
Moreover, reducing the restrictions on $\sigma$, the same result can be obtained
for a neural network of two hidden layers with $d$ units in the first hidden layer
and $2d + 2$ units in the second hidden layer; see \cite{IsmailovBoundedNumberOfNeurons}.
Finally, Guliyev and Ismailov have recently shown
that there exists an \emph{algorithmically computable} activation function $\sigma$
such that feedforward neural networks with two hidden layers and
with $3d + 2$ (hidden) units in total can uniformly approximate
arbitrary continuous functions on compact subsets of $\mathbb{R}^d$; see \cite{GI2}.

In this section we prove that, for certain natural choices of $\sigma$
(which exclude the pathological activation functions discussed above),
a negative result holds true for neural networks with any number of hidden layers
and units in each layer.
Concretely, consider feedforward neural networks
for which the activation function $\sigma$ is either a continuous rational function
or a continuous piecewise polynomial function with finitely many pieces.
Examples of such activation functions are the well known $\mathrm{ReLU}$ and $\mathrm{Hard\,Tanh}$
activation functions, which are linear spline functions with two and three pieces, respectively.
Note that the use of rational or spline approximation tools
in connection with the study of neural networks is natural
and has been used by several other authors; see, for instance \cite{SRK, T, T2, W, WB}.

In order to prove our negative result, we first collect several facts from the literature.
First, from the findings in \cite[Section~6.3]{AO1} concerning rational functions,
we get the following:

\begin{theorem}\label{rational}
  Let $I = [a, b]$ with $a < b$, and let $\mathcal{R}_n^1(I)$ denote the set of rational functions
  $r(x) = p(x) / q(x)$ with $\max \{ \deg (p), \deg (q) \} \leq n$
  and such that $q$ vanishes nowhere on $I$.
  Let $\{n_i\}_{i \in \N}$ be a sequence of natural numbers.
  Let $A_0 := \{ 0 \}$ and $A_i := \mathcal{R}_{n_i}^1(I)$, for $i \in \N$.
  Finally, let either $X = C(I)$ or $X = L^q(I)$ where $q \in (0, \infty)$.
  Then for any null-sequence $\{ \epsilon_i \}_{i \in \N}$,
  there is a function $f \in X$ satisfying
  \[
    E(f, A_i)_{X} \geq \epsilon_i
    \qquad \text{for all } i \in \N .
  \]
\end{theorem}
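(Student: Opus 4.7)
The plan is to derive Theorem~\ref{rational} by applying Shapiro's theorem (Theorem~\ref{AlmiraNR}) to the canonical rational scheme $B_n := \mathcal{R}_n^1(I)$ for $n \in \N$ (rather than to the indexed family $\{A_i\}$ itself), and then reducing the general statement with its arbitrary index sequence $\{n_i\}$ to this special case by a simple majorant argument.

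First, with $B_0 := \{0\}$ and $B_n := \mathcal{R}_n^1(I)$ for $n \in \N$, viewed as subsets of $X$, I would verify that $(X, \{B_n\}_{n \in \N_0})$ is an approximation scheme with jump function $J(n) = 2n$. Scalar invariance (A3) is immediate. Property (A2) follows from the identity $p_1/q_1 + p_2/q_2 = (p_1 q_2 + p_2 q_1)/(q_1 q_2)$, whose numerator and denominator have degree at most $2n$, with denominator still nowhere zero on $I$. Density (A4) is Weierstrass, since $B_n$ contains every polynomial of degree at most $n$. For strict nestedness (A1), the monomial $x^{n+1}$ lies in $B_{n+1} \setminus B_n$: an identity $x^{n+1} = p(x)/q(x)$ on the infinite set $I$ forces $p(x) - x^{n+1} q(x) \equiv 0$ as a polynomial, and comparing top-degree terms then forces $q \equiv 0$, contradicting the nowhere-vanishing requirement.

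Second, I would establish condition (b) of Theorem~\ref{AlmiraNR} for this scheme, which is the crux and is precisely the content of \cite[Section~6.3]{AO1}. The idea is to exhibit a single test function $g \in X \setminus \overline{\bigcup_n B_n}$ whose best rational approximation error decays at a genuinely polynomial rate $E(g, B_n)_X \asymp n^{-\alpha}$ for some $\alpha > 0$. Such a $g$ yields condition (b) with $x_n \equiv g$, $\mathbb{J}_0 = \N$, and constant $c \asymp 2^\alpha$, because the ratio $E(g, B_n)_X / E(g, B_{2n})_X$ stays bounded. Invoking Theorem~\ref{AlmiraNR}, we conclude that Shapiro's theorem holds: for every null-sequence $\{\delta_n\}_{n \in \N}$ there is $f \in X$ with $E(f, B_n)_X \geq \delta_n$ for every $n$.

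Third, I would reduce the statement to this by a majorant argument. Define
\[
  \delta_n
  := \sup \{ \epsilon_i : i \in \N, \, n_i \geq n \} ,
\]
with $\delta_n := 0$ when this set is empty. Since $\epsilon_i \to 0$, for each $\eta > 0$ only finitely many indices $i$ satisfy $\epsilon_i \geq \eta$; once $n$ exceeds the maximum of the corresponding $n_i$'s, one has $\delta_n < \eta$, so $\delta_n \to 0$. Applying the previous step to $\{\delta_n\}$ yields $f \in X$ with $E(f, A_i)_X = E(f, B_{n_i})_X \geq \delta_{n_i} \geq \epsilon_i$ for every $i \in \N$, completing the argument. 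The main obstacle is the construction of the polynomially-decaying $g$ in the second step: this is delicate because the rational approximation of many natural targets (most famously $|x|$, for which Newman's theorem gives the super-polynomial rate $e^{-c\sqrt{n}}$) decays \emph{faster} than polynomially, which would make the ratio in (b) diverge; one must deliberately pick a target whose rational and polynomial approximation rates coincide, which I would invoke from \cite[Section~6.3]{AO1} as a black box.
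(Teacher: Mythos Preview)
Your proposal is correct and follows essentially the same strategy as the paper: both establish Shapiro's theorem for the canonical scheme $B_n = \mathcal{R}_n^1(I)$ by invoking \cite[Section~6.3]{AO1}, and then reduce the arbitrary sequence $\{n_i\}$ to this case via a majorant null-sequence. The paper's reduction is cosmetically different (it sets $k_i := i + \max_{j\leq i} n_j$ and defines $\varepsilon_n := \epsilon_\ell$ on the block $k_{\ell-1} < n \leq k_\ell$, so that $\varepsilon_{k_i} = \epsilon_i$), while your $\delta_n := \sup\{\epsilon_i : n_i \geq n\}$ achieves the same end more directly; and whereas the paper cites \cite[Theorem~6.9]{AO1} as a pure black box, you additionally spell out the approximation-scheme axioms and the mechanism behind condition~(b), which is helpful exposition but not a different argument.
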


\begin{proof}
  In case of $n_i = i$ for all $i \in \N$,
  the result follows from \cite[Item $(1)$ of Theorem~6.9]{AO1}.

  Now, let $\{ n_i \}_{i \in \N}$ be a general sequence of natural numbers.
  Define $k_i := i + \max_{1 \leq j \leq i} n_j$ for $i \in \N$,
  noting that the sequence $\{ k_i \}_{i \in \N}$ is strictly increasing and satisfies
  $k_i \geq n_i$ for all $i \in \N$.
  Define $k_0 := 0$.
  With the null-sequence $\{\epsilon_i\}_{i \in \N}$ given in the theorem,
  we introduce a new sequence $\{ \varepsilon_n \}_{n \in \N}$ defined by
  \[
    \varepsilon_n := \epsilon_\ell
    \text{ for the unique } \ell \in \N \text{ satisfying } k_{\ell - 1} < n \leq k_\ell.
  \]
  Then $\lim_{n \to \infty} \varepsilon_n = 0$.
  Thus, by the case from the beginning of the proof, there exists $f \in X$
  such that $E(f, \mathcal{R}_n^1(I))_{X} \geq \varepsilon_n$ for all $n$.
  In particular,
  \[
    E(f, A_i)_{X}
    =    E(f, \mathcal{R}_{n_i}^1(I))_{X}
    \geq E(f, \mathcal{R}_{k_i}^1(I))_{X}
    \geq \varepsilon_{k_i}
    =    \epsilon_i
    \text{ for all } i \in \mathbb{N}.
  \]
  This completes the proof.
\end{proof}

For splines, we will use the following result.

\begin{theorem}\label{spline1d}
  Let $\mathcal{S}_{n,r}(I)$ denote the set of polynomial splines of degree $\leq n$
  with $r$ free knots in the interval $I = [a,b]$, $a < b$.
  Let $\{ r_i \}_{i \in \N}$ and $\{ n_i \}_{i \in \N}$ be two sequences of natural numbers.
  Define $A_0 := \{ 0 \}$ and ${A_i := \mathcal{S}_{n_i,r_i} \cap C(I)}$ for $i \in \N$.
  Finally, let either $X = C(I)$ or $X = L^q(I)$ where $q \in (0, \infty)$.

  Then for any null-sequence $\{ \eps_i \}_{i \in \N}$, there is a function $f \in X$
  satisfying
  \[
    E(f, A_i)_X \geq \eps_i \qquad \text{for all } i \in \N .
  \]
\end{theorem}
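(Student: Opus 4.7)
The plan is to mirror the proof of Theorem~\ref{rational}: first establish a one-parameter \emph{diagonal base case} in which the standard Almira--Oikhberg machinery (Theorem~\ref{AlmiraNR}) can be invoked, and then reduce the two-parameter statement of Theorem~\ref{spline1d} to that base case through a simple inclusion together with a re-indexing of the given null-sequence.

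For the base case, I would appeal to the free-knot spline analogue of the rational function result from \cite[Section~6.3]{AO1}. What is needed is that $B_0 := \{0\}$ and $B_n := \mathcal{S}_{n,n}(I) \cap C(I)$ for $n \in \N$ form an approximation scheme in $X$ satisfying Shapiro's theorem, i.e.\ for every null-sequence $\{\delta_n\}_{n \in \N}$ there exists $g \in X$ with $E(g, B_n)_X \geq \delta_n$ for all $n$. The approximation scheme axioms are easy to verify: nesting $(A1)$ and scalar invariance $(A3)$ are immediate; the jump property $(A2)$ holds with $J(n) = 2n$ because the sum of two continuous splines of degree $\leq n$ with $\leq n$ free knots lies in $\mathcal{S}_{n, 2n}(I) \cap C(I) \subseteq B_{2n}$; and density $(A4)$ follows from Weierstrass, since $B_n$ contains all polynomials of degree $\leq n$. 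Shapiro's theorem (condition~(b) of Theorem~\ref{AlmiraNR}) is then supplied by the sharp two-sided asymptotics of free-knot spline approximation of a fixed smooth witness, for which the errors at levels $n$ and $2n$ differ only by a constant factor.

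The reduction step is then purely combinatorial, following the argument in the proof of Theorem~\ref{rational}. Given the sequences $\{r_i\}, \{n_i\} \subset \N$, set $k_0 := 0$ and
\[
  k_i := i + \max_{1 \leq j \leq i} \max(r_j, n_j) \qquad (i \in \N),
\]
so that $\{k_i\}_{i \in \N}$ is strictly increasing with $k_i \geq \max(r_i, n_i)$. From the given null-sequence $\{\eps_i\}_{i \in \N}$ build $\delta_n := \eps_\ell$ for the unique $\ell \in \N$ satisfying $k_{\ell-1} < n \leq k_\ell$; this is again a null-sequence. The base case yields $f \in X$ with $E(f, B_n)_X \geq \delta_n$ for all $n$. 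Using the elementary inclusion $\mathcal{S}_{n_i, r_i}(I) \subseteq \mathcal{S}_{k_i, k_i}(I)$ (which holds because $n_i \leq k_i$ and $r_i \leq k_i$), monotonicity of the best approximation error gives
\[
  E(f, A_i)_X \geq E(f, B_{k_i})_X \geq \delta_{k_i} = \eps_i,
\]
which is precisely the conclusion.

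The main obstacle is the base case: locating or writing down a precise Shapiro-type theorem for the diagonal family $\{\mathcal{S}_{n,n}(I) \cap C(I)\}_{n \in \N_0}$ in $X$. Given that the rational analogue is supplied by \cite[Theorem~6.9]{AO1}, it is plausible that the same source (or its method, combined with classical nonlinear approximation bounds for free-knot splines, e.g.\ applied to a convex or otherwise sufficiently regular target with isolated singularities) supplies the required ingredient; once that is in hand, the remainder of the argument is essentially bookkeeping.
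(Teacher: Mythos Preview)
Your proposal is correct and matches the paper's approach almost exactly. The paper also reduces to a known Shapiro-type result for free-knot splines by passing to larger, strictly increasing indices; the only cosmetic difference is that the paper keeps the two parameters separate, setting $N_i := i + \max_{j\le i} n_j$ and $R_i := i + \max_{j\le i} r_j$ and invoking \cite[Theorem~6.12]{AO1} directly for the family $\mathcal{S}_{N_i,R_i}$, whereas you collapse both into a single diagonal index $k_i$ and then re-index the null-sequence as in the proof of Theorem~\ref{rational}. In particular, the ``base case'' you flag as the main obstacle is precisely \cite[Theorem~6.12]{AO1}, so no additional work is needed there.
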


\begin{proof}
  Define $R_i := i + \max_{1 \leq j \leq i} r_j$ and $N_i := i + \max_{1 \leq j \leq i} n_j$
  for $i \in \N$, noting that both sequences $\{ R_i \}_{i \in \N}$ and $\{ N_i \}_{i \in \N}$
  are strictly increasing and that $R_i \geq r_i$ and $N_i \geq n_i$ for all $i \in \N$,
  which easily implies that $B_i := \mathcal{S}_{N_i, R_i} \cap X \supset A_i$ for all $i \in \N$.

  Then \cite[Theorem~6.12]{AO1} shows that there is $f \in X$ satisfying
  $E(f, B_i)_X \geq \eps_i$ for all $i \in \N$.
  Because of $A_i \subset B_i$ and hence $E(f, A_i)_X \geq E(f, B_i)_X$, this implies the claim.
\end{proof}

We are now in a position to prove our negative result
for approximation by feedforward neural networks with many layers.
In the following, we always assume that the activation function $\sigma$
is either a continuous spline $\sigma : \mathbb{R} \to \mathbb{R}$
or a rational function $\sigma(t) = p(t) / q(t)$
with univariate polynomials $p, q \in \mathbb{R}[t]$ and $q(t) \neq 0$ for all $t \in \R$.
Then, we consider the approximation of continuous learning functions,
defined on a compact subset of $\mathbb{R}^d$,
by neural networks using the activation function $\sigma$.

We denote by $\tau_{r, n}^{\sigma, d}$ the set of functions of $d$ real variables
computed by a feedforward neural network with at most $r$ hidden layers
and $n$ units in each layer, with activation function $\sigma$.
For example, an element of $\tau_{1,n}^{\sigma,d}$ is a function of the form
\[
  \phi(x)
  = \sum_{i=1}^n c_i \, \sigma(w^i\cdot x+b_i)
  \text{ where } w^i, x \in \mathbb{R}^d
  \text{ and } c_i, b_i \in \mathbb{R} ,
\]
and an element of $\tau_{2, n}^{\sigma, d}$ is either in $\tau_{1,n}^{\sigma,d}$,
or a function of the form
\[
 \phi(x)
 = \sum_{i=1}^n
     d_i \, \sigma \left(
                     \sum_{j=1}^n
                       c_{i,j} \, \sigma(w^{i,j} \cdot x + b_{i,j}) + \delta_i
                   \right),
  \text{ where } w^{i,j}, x \in \mathbb{R}^d
  \text{ and } c_{i,j}, b_{i,j}, d_i, \delta_i \in\mathbb{R} .
\]

The following is our main result concerning uniform approximation using
neural networks with more than one hidden layer.

\begin{theorem}\label{dimsup}
  Let $K \subset \R^d$ be a compact and convex set with at least two elements.
  Let $\{ r_k \}_{k \in \N}$ and $\{ n_k \}_{k \in \N}$ be arbitrary sequences of natural numbers,
  and let $\{ \varepsilon_k \}_{k \in \N}$ be an arbitrary sequence of real numbers
  converging to zero.
  Then:
  \begin{itemize}
    \item[$(a)$] If $\sigma(t) = p(t) / q(t)$ is a univariate rational function with $q(t) \neq 0$
                 for all $t \in \R$, then there exists $f \in C(K)$ such that
                 \[
                   E \big( f, \tau_{r_k, n_k}^{\sigma,d} \big)_{C(K)}
                   \geq \varepsilon_k
                   \qquad \text{ for all } k \in \N .
                 \]

    \item[$(b)$] If $\sigma : \R \to \R$ is continuous and piecewise polynomial with finitely
                 many pieces, then there exists $f \in C(K)$ such that
                 \[
                   E \big( f,\tau_{r_k,n_k}^{\sigma,d} \big)_{C(K)}
                   \geq \varepsilon_k
                   \qquad \text{ for all } k \in \N .
                 \]
                 In particular, this result applies to networks
                 with activation functions ${\sigma = \mathrm{ReLU}}$ or $\sigma = \mathrm{Hard\,Tanh}$,
                 which both are continuous piecewise linear functions with finitely many pieces.
  \end{itemize}
\end{theorem}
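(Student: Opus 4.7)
The plan is to reduce both parts to the one-dimensional lethargy results already proved (Theorems~\ref{rational} and~\ref{spline1d}) by restricting candidate networks to a line segment inside $K$. Since $K$ is convex with $\#K \geq 2$, I fix $x_0 \neq x_1$ in $K$ and set $L := \{x_0 + t(x_1 - x_0) : t \in [0,1]\} \subset K$, with parametrization $\gamma(t) := x_0 + t(x_1 - x_0)$. Then $L$ is compact (hence closed) in $K$ and $\gamma : [0,1] \to L$ is a homeomorphism.

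The key technical step is to show that there exist functions $N, R : \N \times \N \to \N$, depending only on $\sigma$, such that for every $\phi \in \tau_{r,n}^{\sigma,d}$ the restriction $\phi \circ \gamma$ lies in $\mathcal{R}^1_{N(r,n)}([0,1])$ in case~$(a)$, and in $\mathcal{S}_{N(r,n), R(r,n)}([0,1]) \cap C([0,1])$ in case~$(b)$. For~$(a)$, set $D := \max(\deg p, \deg q)$. Each pre-activation $w^i \cdot \gamma(t) + b^i$ is affine in $t$, so $\sigma$ applied to it is a rational function of degree $\leq D$ with nowhere vanishing denominator (since $q$ never vanishes); an $n$-term linear combination, placed over a common denominator, has degree $\leq nD$; one further application of $\sigma$ multiplies the degree by at most $D$; iterating across $r$ layers yields a finite bound $N(r,n)$. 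For~$(b)$, let $P$ be the number of pieces of $\sigma$ and $D$ its maximum degree. Each $\sigma(w^i \cdot \gamma(t) + b^i)$ is a continuous spline in $t$ of degree $\leq D$ with at most $P-1$ free knots; summing $n$ such preserves the degree and multiplies the knot count by at most $n$; applying $\sigma$ to a continuous spline of degree $D'$ with $R'$ knots produces a continuous spline of degree $\leq D \cdot D'$ and at most $R' + (R'+1)(P-1)D'$ knots, since on each piece the underlying polynomial of degree $\leq D'$ attains each of the $P-1$ breakpoint values of $\sigma$ at most $D'$ times. Iterating the recursion across $r$ hidden layers and the final linear read-out produces the claimed bounds $N(r,n)$ and $R(r,n)$.

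With these inclusions in place, I apply Theorem~\ref{rational} (resp.\ Theorem~\ref{spline1d}) to the sequence $\{N(r_k, n_k)\}_{k \in \N}$ (resp.\ to the pair $\{N(r_k, n_k)\}_{k \in \N}$, $\{R(r_k, n_k)\}_{k \in \N}$) together with $\{\varepsilon_k\}_{k \in \N}$, obtaining $g \in C([0,1])$ with
\[
  E(g, A_k)_{C([0,1])} \geq \varepsilon_k \qquad \text{for all } k \in \N,
\]
where $A_k$ denotes the associated 1D class. Since $L$ is closed in $K$ and $g \circ \gamma^{-1} : L \to \R$ is continuous, Tietze's extension theorem (see~\cite[Theorem~4.34]{FollandRA}) yields $f \in C(K)$ with $f|_L = g \circ \gamma^{-1}$, equivalently $f \circ \gamma = g$. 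For any $\phi \in \tau_{r_k, n_k}^{\sigma,d}$, the restriction $\phi \circ \gamma$ belongs to $A_k$ by the previous paragraph, so
\[
  \| f - \phi \|_{C(K)}
  \geq \| f|_L - \phi|_L \|_{C(L)}
  = \| g - \phi \circ \gamma \|_{C([0,1])}
  \geq E(g, A_k)_{C([0,1])}
  \geq \varepsilon_k,
\]
which gives $E(f, \tau_{r_k, n_k}^{\sigma,d})_{C(K)} \geq \varepsilon_k$, as required.

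The main obstacle is the combinatorial bookkeeping in the second paragraph: one must track carefully how degrees and knot counts propagate through iterated linear combinations and compositions with $\sigma$ so as to obtain bounds $N(r,n)$, $R(r,n)$ that are finite for each $(r,n)$ and depend only on $(r,n)$ and on $\sigma$, not on the particular weights of the network. Once those inclusions are established, the 1D lethargy theorems together with the Tietze extension give the conclusion essentially for free.
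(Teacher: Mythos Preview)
Your proof is correct and follows essentially the same strategy as the paper: restrict to a line segment in $K$, observe that network functions become one-dimensional rational functions (resp.\ continuous splines) of complexity bounded in terms of $(r,n)$ and $\sigma$, invoke Theorem~\ref{rational} (resp.\ Theorem~\ref{spline1d}), and extend the resulting $g$ back to $K$. The only cosmetic differences are that the paper uses the explicit linear extension $f(x)=\overline{g}\bigl((x-b)\cdot a/\|a\|^2\bigr)$ in place of your Tietze argument, and for part~$(b)$ it cites \cite[Lemma~3.6]{T2} for the spline complexity bound rather than carrying out the knot-and-degree bookkeeping directly as you do.
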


\begin{proof}
  \emph{(a)} Since $\# K \geq 2$, there are $a,b \in \R^d$ with $a \neq 0$
  such that $b, b + a \in K$.
  By convexity of $K$, this means that $\psi : [0,1] \to K, t \mapsto b + t \, a$
  is well-defined and continuous.
  Since $\sigma$ is a continuous rational function, it is not hard to see that
  for arbitrary $r,n \in \N$, there is some $\omega (r,n) = \omega(r,n,\sigma) \in \N$
  such that if $R \in \tau_{r,n}^{\sigma,d}$ is arbitrary,
  then $R \circ \psi$ is a univariate, continuous rational function of degree at most $\omega(r,n)$.
  Now, for $\{ \varepsilon_k \}_{k \in \N}$ as in the statement of the current theorem,
  Theorem~\ref{rational} yields a continuous function $g \in C([0,1])$ such that
  $E \big(g, \mathcal{R}_{\omega(r_k, n_k)}^1 ([0,1]) \big)_{C([0,1])} \geq \varepsilon_k$
  for all $k \in \N$.
  Clearly, one can extend $g$ to a continuous function $\overline{g} \in C(\R)$.
  Now, define
  \(
    f : \R^d \to \R, x \mapsto \overline{g} \big( (x - b) \cdot a / \| a \|^2 \big)
  \)
  and note that $f$ is continuous and satisfies
  \(
    f(\psi(t))
    = \overline{g} \big( t a \cdot a / \| a \|^2 \big)
    = \overline{g} (t)
  \)
  for all $t \in [0,1]$; that is, $f \circ \psi = g$.

  Now, for $R \in \tau_{r_k, n_k}^{\sigma, d}$, we saw above that $R \circ \psi$
  is a continuous univariate rational function of degree at most $\omega(r_k, n_k)$,
  i.e., $R \circ \psi \in \mathcal{R}_{\omega(r_k, n_k)}^1 ([0,1])$.
  Since $\mathrm{range}(\psi) \subset K$ and $f \circ \psi = g$, this implies
  \[
    \| f - R \|_{C(K)}
    \geq \| f \circ \psi - R \circ \psi \|_{C([0,1])}
    \geq E \big( g, \mathcal{R}_{\omega(r_k, n_k)}^1 ([0,1]) \big)_{C([0,1])}
    \geq \varepsilon_k .
  \]
  Since $R \in \tau_{r_k, n_k}^{\sigma, d}$ was arbitrary, this proves that
  $E(f, \tau_{r_k, n_k}^{\sigma, d})_{C(K)} \geq \varepsilon_k$ for all $k \in \N$.

  \medskip{}

  \emph{(b)}
  For $a, b \in \R^{d}$, let us define $\phi_{a,b} : \R \to \R^{d}, t \mapsto b + ta$
  and $K_{a,b} := \phi_{a,b}^{-1} (K)$.
  Note that $K_{a,b} \subset \R$ is either empty, or a closed interval,
  which is compact if $a \neq 0$.
  Next, let
  \[
    \mathcal{S}_{n,k}^{d,\mathrm{slice}}(K)
    \!:=\!
    \left\{
      f : K \to \R
      \colon
      f \text{ continuous and } f \circ \phi_{a,b}|_{K_{a,b}} \in \mathcal{S}_{n,k}(K_{a,b})
      \quad \text{for all } a, b \in \R^{d}
    \right\} ,
  \]
  where we interpret the condition $f \circ \phi_{a,b}|_{K_{a,b}} \in \mathcal{S}_{n,k}(K_{a,b})$
  as true in case of $K_{a,b} = \varnothing$.

  Since $\sigma : \R \to \R$ is continuous and piecewise polynomial with finitely many pieces,
  it follows from \cite[Lemma~3.6]{T2} that for each $n, r \in \N$
  there are ${N(n, r) = N ( \sigma, n, r ) \in \N}$
  and $M ( n, r ) = M ( \sigma, n, r ) \in \N$ such that
  \[
    \left\{
      f|_{K}\colon f\in\tau_{r,n}^{\sigma,d}
    \right\}
    \subset \mathcal{S}_{N ( n, r ), M ( n, r )}^{d,\mathrm{slice}} (K).
  \]
  Since $K$ is convex with $\# K \geq 2$, there are $a, b \in \R^{d}$ with $a \neq 0$ and such that
  ${K_{a,b} = [\alpha, \beta] =: I}$ for certain $\alpha, \beta \in \R$ with $\alpha < \beta$.

  Define $N_k := N(n_k, r_k)$ and $R_k := M(n_k, r_k)$.
  By Theorem~\ref{spline1d}, there is a function $f_{0} \in C(I)$ such that
  \[
    E \big( f_{0}, \mathcal{S}_{N_k, R_k} (I) \big)_{C(I)}
    \geq \varepsilon_{k}
    \qquad \text{ for all } \, k \in \N.
  \]
  By extending $f_{0}$ to be constant on $(-\infty, \alpha)$ and on $(\beta, \infty)$,
  we obtain a continuous function $f_{1} \in C(\R)$ satisfying $f_{1}|_{I} = f_{0}$.

  Define $f : K \to \R, x \mapsto f_{1} \left( a \cdot (x - b) / \|a\|^{2} \right)$,
  and note for $t \in I = K_{a,b}$ that
  \[
    \left(f\circ\phi_{a,b}\right)\left(t\right)
    = f_{1} \left( a \cdot ( b + a t - b) / \|a\|^{2} \right)
    = f_{1}(t)
    = f_{0}(t).
  \]
  Now, for arbitrary $g \in \tau_{r_{k}, n_{k}}^{\sigma,d}$, we have
  \(
    g|_{K}
    \in \mathcal{S}_{N(n_{k},r_{k}), M(n_{k}, r_{k})}^{d, \mathrm{slice}}(K)
    =   \mathcal{S}_{N_{k}, M_{k}}^{d, \mathrm{slice}}(K)
  \)
  and hence $g|_{K} \circ \phi_{a,b} \in \mathcal{S}_{N_{k}, M_{k}}(I)$.
  Since $\phi_{a,b}(I) \subset K$, we thus get
  \[
    \| f - g \|_{C(K)}
    \!\geq\! \left\Vert f \circ \phi_{a,b} - g \circ \phi_{a,b} \right\Vert_{C(I)}
    \!=\!    \left\Vert f_{0}-\left(g|_{K}\circ\phi_{a,b}\right)\right\Vert_{C(I)}
    \!\geq\! E \left(f_{0}, \mathcal{S}_{N_{k}, M_{k}}(I)\right)_{C(I)}
    \!\geq\! \varepsilon_{k},
  \]
  and hence $E(f, \tau_{r_{k}, n_{k}}^{\sigma, d})_{C(K)} \geq \varepsilon_{k}$ for all $k \in \N$.
\end{proof}

\begin{remark}\label{rem:UniformResultForSetsWithNonemptyInterior}
  A slight modification of the proof of Theorem~\ref{dimsup} shows
  that the very same statement holds true as soon as the compact (but not necessarily convex)
  set $K$ satisfies that $\{ b + t \, a \,\,\colon t \in [0,1] \} \subset K$ for certain
  $a,b \in \R^d$ with $a \neq 0$.
  This in particular holds for any compact set $K$ with nonempty interior.
\end{remark}

A similar lower bound as in Theorem~\ref{dimsup} also holds for approximation in $L^q$.

\begin{theorem}\label{thm:LowerBoundRationalSplineLq}
  Let $K \subset \R^d$ be measurable with nonempty interior and let $q \in (0,\infty)$.
  Assume that either $\sigma : \R \to \R$ is continuous and
  piecewise polynomial with finitely many pieces, or that $\sigma(t) = p(t) / q(t)$
  is a rational function with $q(t) \neq 0$ for all $t \in \R$.

  Then, for arbitrary sequences $\{ r_k \}_{k \in \N}$ and $\{ n_k \}_{k \in \N}$ of natural numbers,
  and any null-sequence $\{ \eps_k \}_{k \in \N} \subset \R$, there exists a function
  $f \in L^q(K)$ satisfying
  \[
    E\bigl(f, \tau_{r_k,n_k}^{\sigma,d}\bigr)_{L^q(K)}
    \geq \eps_k
    \qquad \text{for all } k \in \N .
  \]
\end{theorem}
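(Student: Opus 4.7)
The plan is to reduce the multivariate $L^q$ approximation problem to the univariate results of Theorems~\ref{rational} and~\ref{spline1d} by means of Fubini's theorem applied on an axis-aligned box inside $K$. This is the natural $L^q$ analogue of the line-segment argument used in the proof of Theorem~\ref{dimsup}.

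Since $K$ has nonempty interior, one can fix an axis-aligned box $I \times Q \subset K$, with $I = [\alpha, \beta] \subset \R$ a nondegenerate interval and $Q \subset \R^{d-1}$ a box of positive, finite $(d-1)$-dimensional Lebesgue measure $|Q|$ (in the case $d = 1$ one drops the factor $Q$ below and sets $|Q| := 1$). For every $R \in \tau_{r_k, n_k}^{\sigma, d}$ and every $y \in Q$, the univariate slice $R_y : t \mapsto R(t, y)$ lies in a fixed class of univariate functions whose complexity parameters depend only on $\sigma$, $r_k$ and $n_k$, and in particular not on $y$. In the rational case, $R$ is a multivariate rational function, and the degree estimate used in part~(a) of the proof of Theorem~\ref{dimsup} gives $R_y \in \mathcal{R}_{\omega(r_k, n_k)}^1(I)$ for some $\omega(r_k, n_k) \in \N$. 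In the spline case, \cite[Lemma~3.6]{T2}, invoked exactly as in part~(b) of the proof of Theorem~\ref{dimsup}, yields $R|_K \in \mathcal{S}_{N_k, M_k}^{d,\mathrm{slice}}(K)$ with $N_k := N(n_k, r_k)$ and $M_k := M(n_k, r_k)$, so that $R_y|_I \in \mathcal{S}_{N_k, M_k}(I) \cap C(I)$ for every $y \in Q$ (continuity of $R_y$ comes from continuity of $\sigma$).

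Next, I would apply Theorem~\ref{rational} (respectively Theorem~\ref{spline1d}) on $X = L^q(I)$ to the rescaled null-sequence $\bigl\{ |Q|^{-1/q} \eps_k \bigr\}_{k \in \N}$ to obtain a function $g \in L^q(I)$ satisfying $E(g, A_k)_{L^q(I)} \geq |Q|^{-1/q} \eps_k$ for all $k \in \N$, where $A_k := \mathcal{R}_{\omega(r_k, n_k)}^1(I)$ in the rational case and $A_k := \mathcal{S}_{N_k, M_k}(I) \cap C(I)$ in the spline case. Then define $f : K \to \R$ by $f(x_1, \dots, x_d) := g(x_1) \, \indicator_{I \times Q}(x)$; this $f$ lies in $L^q(K)$ because $\|f\|_{L^q(K)}^q = |Q| \cdot \|g\|_{L^q(I)}^q < \infty$. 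For every $R \in \tau_{r_k, n_k}^{\sigma, d}$, the Fubini--Tonelli theorem combined with the slicing information above yields
\[
  \|f - R\|_{L^q(K)}^q
  \geq \int_Q \int_I |g(t) - R_y(t)|^q \, dt \, dy
  \geq \int_Q E(g, A_k)_{L^q(I)}^q \, dy
  \geq |Q| \cdot |Q|^{-1} \eps_k^q
  = \eps_k^q ,
\]
and taking the infimum over $R$ delivers the desired bound $E(f, \tau_{r_k, n_k}^{\sigma, d})_{L^q(K)} \geq \eps_k$ for all $k \in \N$.

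The main issue to verify is that the slice complexity parameters $\omega(r_k, n_k)$, $N_k$, $M_k$ are genuinely uniform in $y \in Q$; in the rational case this is immediate from polynomial degree bookkeeping, while in the spline case it is exactly the content of \cite[Lemma~3.6]{T2} already exploited in Theorem~\ref{dimsup}. A minor technicality is that for $q \in (0,1)$ the quantity $\|\cdot\|_{L^q}$ is merely a quasi-norm, but this does not affect the argument since Theorems~\ref{rational} and~\ref{spline1d} are already formulated for all $q \in (0, \infty)$ and only pointwise $L^q$ best-approximation quantities enter the computation above.
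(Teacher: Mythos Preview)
Your proposal is correct and follows essentially the same approach as the paper: pick an axis-aligned box inside $K$, invoke the one-dimensional negative results (Theorems~\ref{rational} and~\ref{spline1d}) on a rescaled null-sequence to obtain a hard univariate function $g$, tensor it with the indicator of the box to define $f$, and use Fubini together with the uniform-in-$y$ slice-complexity bounds (polynomial degree bookkeeping in the rational case, \cite[Lemma~3.6]{T2} in the spline case) to push the lower bound through. The paper's proof differs only in notation, parametrizing slices as $t \mapsto R(y + t e_1)$ on the cube $x^{(0)} + [-\delta,\delta]^d$ rather than as $R_y(t) = R(t,y)$ on $I \times Q$.
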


\begin{proof}
  Because $K$ has nonempty interior, there exist $x^{(0)} \in K$ and $\delta > 0$
  satisfying ${K_0 := x^{(0)} + [-\delta,\delta]^d \subset K}$.
  For brevity, set $I := \bigl[x^{(0)}_1 - \delta, x^{(0)}_1 + \delta\bigr]$.
  Now, we distinguish two cases:

  \smallskip{}

  \noindent
  \textbf{Case~1 ($\sigma$ is rational):}
  In this case, it is easy to see that for arbitrary $r,n \in \N$ there exists
  $\omega(r,n) = \omega(n,r,\sigma) \in \N$ such that for any $y \in \R^d$
  and any $R \in \tau_{r,n}^{\sigma,d}$, the function $t \mapsto R(y + t \, e_1)$
  is a continuous rational function of degree at most $\omega(r,n)$,
  where $e_1 = (1,0,\dots,0) \in \R^d$ is the first standard basis vector.
  Next, Theorem~\ref{rational} provides a function $g_0 \in L^q(I)$ satisfying
  \[
    E\bigl(g_0, \mathcal{R}^1_{\omega(r_k,n_k)} (I)\bigr)_{L^q(I)}
    \geq \eps_k / (2\delta)^{(d-1)/q}
    \qquad \text{for all } k \in \N .
  \]
  Extend $g_0$ by zero to a function $g \in L^q(\R)$, and define
  $f : \R^d \to \R, x \mapsto \indicator_{K_0}(x) \cdot g(x_1)$,
  where Fubini's theorem easily shows that $f \in L^q(\R^d)$.

  Now, given any $y \in K_0$ with $y_1 = x^{(0)}_1$ and any $R \in \tau_{r_k, n_k}^{\sigma,d}$,
  recall that $t \mapsto R (y + t \, e_1)$ is a continuous rational function of degree at most
  $\omega(r_k, n_k)$, so that
  \begin{align*}
    \| f(y + t \, e_1) - R (y + t \, e_1) \|_{L^q_t ([-\delta,\delta])}
    & = \| g(y_1 + t) - R (y + t \, e_1) \|_{L^q_t ([-\delta,\delta])} \\
    & \geq E\bigl(g, \mathcal{R}_{\omega(r_k,n_k)}^1 (I)\bigr)_{L^q(I)}
      \geq \eps_k / (2\delta)^{(d-1)/q} .
  \end{align*}
  Therefore, writing $y^{(s')} := x^{(0)} + (0,s_2,\dots,s_d) \in K_0$
  for $s' = (s_2,\dots,s_d) \in [-\delta,\delta]^{d-1}$,
  Fubini's theorem shows that
  \begin{equation}
    \begin{split}
        \| f - R \, \|_{L^q(K)}^q
      & \geq \int_{[-\delta,\delta]^d}
               |f(x^{(0)} + s) - R(x^{(0)} + s)|^q
             \, d s \\
      & =    \int_{[-\delta,\delta]^{d-1}}
               \big\|
                 f(y^{(s')} + t \, e_1) - R(y^{(s')} + t \, e_1)
               \big\|_{L^q_t ([-\delta,\delta])}^q
             \, d s' \\
      & \geq (2\delta)^{d-1} \cdot \frac{\eps_k^q}{(2 \delta)^{d-1}}
        =    \eps_k^q \,\,,
    \end{split}
    \label{eq:FubiniArgument}
  \end{equation}
  and hence $E(f, \tau_{r_k,n_k}^{\sigma,d})_{L^q(K)} \geq \eps_k$ for all $k \in \N$.

  \medskip{}

  \noindent
  \textbf{Case~2 ($\sigma$ is a spline):}
  In this case, \cite[Lemma~3.6]{T2} shows for arbitrary $n,r \in \N$ that there are
  $N (n,r) = N(\sigma,n,r) \in \N$ and $M (n,r) = M(\sigma,n,r) \in \N$ such that
  for each $R \in \tau_{r,n}^{\sigma,d}$ and arbitrary $y \in \R^d$, we have
  \(
    \bigl(
      t \mapsto R(y + t \, e_1)
    \bigr)
    \in \mathcal{S}_{N(n,r), M(n,r)} \cap C(\R)
  \).
  Set $N_i := N(n_i, r_i)$ and $M_i := M(n_i, r_i)$, as well as $A_i := \mathcal{S}_{N_i,R_i} \cap C(I)$
  for $i \in \N$ and $A_0 := \{ 0 \}$.
  Then Theorem~\ref{spline1d} yields a function $g_0 \in L^q(I)$ satisfying
  $E(g_0, A_i)_{L^q(I)} \geq \eps_i / (2 \delta)^{(d-1)/q}$ for all $i \in \N$.
  As in the previous step, extend $g_0$ by zero to a function $g \in L^q(\R)$,
  and define $f : \R^d \to \R, x \mapsto \indicator_{K_0}(x) \cdot g(x_1)$,
  noting that $f \in L^q(\R^d)$ as a consequence of Fubini's theorem.

  Now, recall that $\bigl(t \mapsto R(y + t \, e_1)\bigr) \in \mathcal{S}_{N_k,M_k} \cap C(I) = A_k$
  for any $y \in K_0$ with $y_1 = x^{(0)}_1$ and any $R \in \tau_{r_k,n_k}^{\sigma,d}$,
  and hence
  \begin{align*}
    \| f(y + t \, e_1) - R (y + t \, e_1) \|_{L^q_t ([-\delta,\delta])}
    & = \| g(y_1 + t) - R (y + t \, e_1) \|_{L^q_t ([-\delta,\delta])} \\
    & \geq E(g, A_k)_{L^q(I)}
      \geq \eps_k / (2\delta)^{(d-1)/q} .
  \end{align*}
  Using this estimate, Equation~\eqref{eq:FubiniArgument} shows exactly as above that
  $E(f, \tau_{r_k,n_k}^{\sigma,d})_{L^q(K)} \geq \eps_k$.
\end{proof}

\vspace*{0.8cm}

\noindent{\sc J.~M.~Almira, P.~E.~Lopez-de-Teruel, D.~J.~Romero-L\'{o}pez}\\
Departamento de Ingenier\'{\i}a y Tecnolog\'{\i}a de Computadores,  Universidad de Murcia.\\
 30100 Murcia, SPAIN\\
e-mail: \texttt{jmalmira@um.es},  \texttt{pedroe@um.es},  \texttt{dj.romerolopez@um.es}

\vspace*{0.8cm}

\noindent  {\sc F.~Voigtlaender }\\
Department of Scientific Computing, Catholic University of Eichstätt-Ingolstadt. \\
85072 Eichstätt, GERMANY\\
e-mail: \texttt{felix@voigtlaender.xyz}
\end{document}